\newtheorem{definition}{Definition}
\newtheorem{theorem}{Theorem}
\newtheorem{lemma}{Lemma}
\newtheorem{corollary}{Corollary}
\begin{document}

\lefttitle{Unwrapping All ReLU Networks}

\jnlPage{1}{8}
\jnlDoiYr{2021}
\doival{10.1017/xxxxx}

\title[Unwrapping All ReLU Networks]{Unwrapping All ReLU Networks}

\begin{authgrp}
\author{\sn{Villani} \gn{Mattia Jacopo}}
\affiliation{King's College London}
\author{\sn{McBurney} \gn{Peter}}
\affiliation{King's College London}
\end{authgrp}

\history{\sub{Safe and Trusted AI Workshop (ICLP)} \rev{xx xx xxxx;} \acc{xx xx xxxx}}

\maketitle

\begin{abstract}
Deep ReLU Networks can be decomposed into a collection of linear models, each defined in a region of a partition of the input space. This paper provides three results extending this theory. First, we extend this linear decompositions to Graph Neural networks and tensor convolutional networks, as well as networks with multiplicative interactions. Second, we provide proofs that neural networks can be understood as interpretable models such as Multivariate Decision trees and logical theories. Finally, we show how this model leads to computing cheap and exact SHAP values. We validate the theory through experiments with on Graph Neural Networks. 
\end{abstract}

\begin{keywords}
ReLU Networks, Graph Neural Networks, SHAP Values, Explainable Artificial Intelligence
\end{keywords}

\section{Introduction}

Theoretical inquiry in neural networks is paramount in understanding the success and limits of these models. By studying the details of the construction and comparing how architectures are related, we can generate explanations that verify the behaviour of the networks. In this paper we extend existing results by \cite{sudjianto2020unwrapping} to the multilinear setting. Their work shows that ReLU networks, i.e. deep neural networks with ReLU activation functions, can be represented exactly through piecewise-linear functions whose local linear models can be found exactly on each region. We then draw a connection with SHAP Values, showing how this decomposition can provide an explicit representation and a fast procedure to compute them. 

\textbf{Related Work.} This paper builds closely on \citep{sudjianto2020unwrapping}. Work by \cite{montufar2014number} presents bounds to the complexity of the neural network in terms of the number of linear regions as a function of neurons. \cite{balestriero2018spline} is concerned with representing families of neural networks, including Convolutional Neural Networks \cite{lecun1998gradient}, as compositions of affine splines.

\textbf{Roadmap.} In Section \ref{sec_geom} we present notation and theoretical background as well as extending the theory to general families of neural networks. In \ref{sec_symb} we formalise work from \cite{aytekin2022neural}. \ref{sec_exp} proves explainability implications of our work in making the computation SHAP values \cite{lundberg2017unified} faster. Proofs to theorems are contained in the Appendices.

\section{Unwrapping Geometric Neural Networks}\label{sec_geom}
The aim of this section is to describe a neural network as piece-wise linear functions, a process that \cite{sudjianto2020unwrapping} refer to as \textit{unwrapping}. After preliminaries, we take the Graph Convolutional Neural Network (GCN), of which Recurrent Neural Networks (RNNs) \cite{rumelhart1986learning} and Convolutional Neural Networks (CNNs) \cite{lecun1998gradient} are special cases, and derive their local linear model decomposition. We further generalise the results to neural networks with tensor contractions and multiplicative interactions, as present in the Long Short Term Memory network \cite{hochreiter1997long}. 

\subsection{Preliminaries}

By feed-forward neural networks we will mean deep neural networks in which the architecture is determined entirely by a composition of linear transformations and element-wise activation functions on the resulting vectors; this we will call a \textit{layer}. Our focus will be on said architectures having as activation function rectified linear units. 

\textbf{A feed-forward neural network} $\mathcal{N}: \mathbb{R}^n \rightarrow \mathbb{R}^m$ is a composition of $L$ parametrised functions, which we refer to as the number of layers, with $\textbf{N}=[n_1,n_2,n_3,...,n_L]$ neurons per layer, such that:
\begin{align*} 
\chi^{(l)} = \sigma (W^{(l-1)}\chi^{(l-1)} + b^{(l-1)})
&= \sigma(z^{(l)}). 
\end{align*}
Here, for a given layer $l \in \{1,...,L\}$, $z^{(l)}$ are referred to as preactivations, $\chi^{(l)}$s as activations and $b^{(l)}$s as layer biases. In particular, ReLU (Rectified Linear Units) is the activation function $\sigma:\mathbb{R} \rightarrow \mathbb{R}$ applied element-wise, given by: 
\[ 
\chi_i^{(l)} = \sigma(z_i^{(l)}) = \max\{ 0, z_i^{(l)}\}. 
\]

For a given neuron $\chi_i^{(l)}, i \in \{1,..., n_l\}$, the binary activation state is a function $s: \mathbb{R} \rightarrow \{0,1\}$. Generally, define an activation state as a function of $s: \mathbb{R}\rightarrow \mathcal{S} = \{0,1,2,..,S\}$ for a collection of states, where $|\mathcal{S}|>2$. The function generates a natural partition of $\mathbb{R}$ by $s^{-1}$. 
Depending on the state of each neuron, we can define an \textbf{activation pattern} which encodes each state as implied by a given input. Every layer will have an activation vector, the collection of which we call the \textit{activation pattern}.

Given an instance $x \in \mathbb{R}^n$ and a feed-forward neural network $\mathcal{N}$ with $L$ layers, each with a number of neurons described by the vector $\textbf{N}$, the activation pattern is an ordered collection of vectors 
\[
\mathbf{P}(x) = \{\mathbf{P}^{(1)}(x), \mathbf{P}^{(2)}(x), ... , \mathbf{P}^{(L)}(x)\}
\]
such that 
\[
\mathbf{P}_i^{(l)}(x) = s(\chi_i^{(l)}) \in \mathcal{S}, \]
where $i\in \{1,..., n_l\}$ are indices for a activations at a layer $l\in \{1,..., L\}.$

The collection of all points yielding the same activation pattern, which can be thought of as \textit{fibers}, we will call the activation region for the network. 

We refer to the \textbf{activation region} $\mathcal{R}^{\mathbf{P}(x)} \subset \mathbb{R}^n$ of the activation pattern as the collection of points $v \in \mathbb{R}^n$ such that
\[
\forall v \in \mathcal{R}^{\mathbf{P}(x)}, \: \mathbf{P}(v) = \mathbf{P}(x).
\]

Importantly, these regions are convex and partition the input space of the neural network \cite{sudjianto2020unwrapping}. This is key for the characterisation of the neural network as a piece-wise linear function: the convex domain allows us to have a description of the activation regions as intersections of half-spaces.

\begin{theorem}\textbf{Local Linear Model of a ReLU Network},\cite{sudjianto2020unwrapping} \label{thm:llms}
Given a feedforward neural network $\mathcal{N} : \mathbb{R}^n \rightarrow \mathbb{R}^m$, with ReLU activation $\sigma$, $L$ layers and neurons in $\mathbf{N}= [n_1,..., n_L]$, the local linear model $\eta^\mathbf{P}(z)$ for the activation region $\mathcal{R}^\mathbf{P}(z)$ of an activation pattern $\mathbf{P}(z)$, with $z \in \mathcal{X}\subset \mathbb{R}^n$, is given by

\[
\eta^\mathbf{P}(x) = w^{\mathbf{P}(z)T} x + b^{\mathbf{P}(z)}, \forall x \in \mathcal{R}^{\mathbf{P}(z)}
\]

where the weight parameter is given by:

\[
w^{\mathbf{P}(z)}= \prod_{h = 1}^L W^{(L+1-h)} D^{L+1-h} W^{(0)},
\]
and the bias parameter is given by
\[
b^{\mathbf{P}(z)} = \sum_{l=1}^{L} \prod_{h = 1}^{L+1-l} W^{(L+1-h)} D^{L+1-h} b^{(l-1)}+b^{(L)},
\]
where 
\[
D^{(l)} = \text{diag}(\mathbf{P}(z)),
\]
is the diagonal matrix of the activation pattern for a given layer $l \in \{1,...,L\}$. 
\end{theorem}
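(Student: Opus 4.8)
The plan is to exploit the defining property of an activation region: on $\mathcal{R}^{\mathbf{P}(z)}$ every neuron has a constant activation state, so the only nonlinearity in $\mathcal{N}$ — the ReLU — collapses to a fixed linear map there. Concretely, a neuron in state $1$ satisfies $\sigma(z_i^{(l)}) = z_i^{(l)}$ and a neuron in state $0$ satisfies $\sigma(z_i^{(l)}) = 0$, so applying $\sigma$ element-wise is exactly left multiplication by the $0/1$ diagonal matrix $D^{(l)} = \text{diag}(\mathbf{P}^{(l)}(z))$. First I would record this as the key identity $\chi^{(l)} = D^{(l)} z^{(l)} = D^{(l)}\bigl(W^{(l-1)}\chi^{(l-1)} + b^{(l-1)}\bigr)$, valid for every $x \in \mathcal{R}^{\mathbf{P}(z)}$ and every layer $l$, noting that $D^{(l)}$ does not depend on the particular $x$ precisely because all points of the region share the pattern $\mathbf{P}(z)$ — this is the convexity/partition fact cited from the prior work.

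Having removed the nonlinearity, each layer map is now affine in $\chi^{(l-1)}$, and a composition of affine maps is affine; this already yields the existence of $w^{\mathbf{P}(z)}$ and $b^{\mathbf{P}(z)}$ with $\eta^{\mathbf{P}}(x) = w^{\mathbf{P}(z)T}x + b^{\mathbf{P}(z)}$ on the region. To pin down the explicit coefficients I would unroll the recursion $\chi^{(l)} = D^{(l)}W^{(l-1)}\chi^{(l-1)} + D^{(l)}b^{(l-1)}$ from $\chi^{(0)} = x$ up to the read-out $\mathcal{N}(x) = W^{(L)}\chi^{(L)} + b^{(L)}$. Telescoping the homogeneous part collects one factor $W^{(l)}D^{(l)}$ per layer and leaves $W^{(0)}$ acting on $x$, giving $w^{\mathbf{P}(z)} = W^{(L)}D^{(L)}\cdots W^{(1)}D^{(1)}W^{(0)} = \prod_{h=1}^{L} W^{(L+1-h)} D^{(L+1-h)} W^{(0)}$.

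I would establish this cleanly by induction on $L$: the base case is a single affine read-out, and the inductive step prepends one more $W^{(l)}D^{(l)}$ factor while simultaneously propagating every previously accumulated bias term through that same extra factor. For the constant term, the bias $b^{(l-1)}$ injected at layer $l$ is transported to the output by exactly the partial product $W^{(L)}D^{(L)}\cdots W^{(l)}D^{(l)} = \prod_{h=1}^{L+1-l} W^{(L+1-h)} D^{(L+1-h)}$; summing these contributions over $l = 1,\ldots,L$ and adding the final read-out bias $b^{(L)}$ reproduces the claimed $b^{\mathbf{P}(z)}$.

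The conceptual content — that ReLU is linear on a region — is immediate; the main obstacle is purely the index bookkeeping in the telescoping products. The delicate point is aligning the running index $h$ in $\prod_{h=1}^{L+1-l}$ with the layer at which each bias enters, so that the partial products truncate at the correct factor, and handling the endpoint terms $W^{(0)}$ (on the input) and $W^{(L)}, b^{(L)}$ (from the final read-out) consistently with the hidden-layer convention $\chi^{(l)} = \sigma(W^{(l-1)}\chi^{(l-1)} + b^{(l-1)})$. Carrying out the induction on the layer count is the cleanest way to keep this bookkeeping honest.
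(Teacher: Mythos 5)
Your proposal is correct: fixing the activation region so that ReLU collapses to multiplication by the constant diagonal matrix $D^{(l)} = \text{diag}(\mathbf{P}^{(l)}(z))$, then unrolling the resulting affine recursion to collect the product $W^{(L)}D^{(L)}\cdots W^{(1)}D^{(1)}W^{(0)}$ and the truncated partial products carrying each bias $b^{(l-1)}$, reproduces exactly the stated formulas, and your index bookkeeping for $\prod_{h=1}^{L+1-l}$ checks out. The paper gives no proof of this theorem itself (it is imported from Sudjianto et al.), but your argument is the standard one and is precisely the technique the paper's own Appendix A proof applies in the more general graph-convolutional setting, where the diagonal-matrix substitution reappears as a Hadamard product with the activation pattern after vectorisation.
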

%%%%%%%%%%%%%%%%%%%%%%%%%%%%%%%%%%%%%%%%%%%%%%%%%%%%%%%%%%%%%%%%%%%%%%%%%%%%%%%%%%%%%%%%%%%%%%%%%%%%%%%%%%%%%%%%%%%%%%%%

\subsection{Unwrapping Graph and Tensor Neural Networks}

In the case of neural networks with convolutions, which we intend loosely as parametrised matrix or tensor operations with weights, learnable or otherwise, such as Convolutional Neural Networks \cite{lecun1998gradient}, Graph Convolutional Networks \cite{kipf2016semi}, the local linear model decomposition needs to take into account the \textit{weight sharing} scheme that is implied by the convolution. GCNs encompass RNNs and CNNs, meaning that we set convolutional weights of GCNs  to zero in particular ways to achieve networks that fall in the latter classes of architectures. Therefore, decomposing GCNs is in itself a significant result, which we obtain below. 

\begin{definition}[\textbf{Graph Convolutional Neural Network}]
    Given a graph $\mathcal{G} = (V,E)$ with vertex and edge sets $V,E$ respectively, a Graph Convolutional Network (GCN) is a composition of $L$ parametrised layers, with $\textbf{N}=[n_1,n_2,n_3,...,n_L]$ neurons per layer, yielding a function $\mathcal{N}^{\mathcal{G}}: \mathbb{R}^{k \times n} \rightarrow \mathbb{R}^{k \times m}$, where each forward pass is defined by: 

\[ 
\chi^{(l+1)} = \sigma \left( A \cdot \chi^{(l)} W^{(l)} + b\right)
\]

where $k = |V|$ is the number of nodes of the graph, $\chi^{(l)} \in \mathbb{R}^{k \times n_l}$ and $W^{(l)} \in \mathbb{R}^{n_{l-1} \times n_{l}}$ is a weight matrix. Finally, $b \in \mathbb{R}^{ k \times n_{l} }$ is a matrix of biases and $A$ is a graph convolutional operator $A \in \mathbb{R}^{k \times k}$, often taken to be an adjacency matrix or its Laplacian. 
\end{definition}

This definition underscores how the GCN can be viewed as a multilinear variant of the feedforward neural network. Indeed, two operations are applied to the activations of the previous layer: a left and right matrix multiplication. Viewing these as a single linear operation on a vectorised input allows us to decompose the network similarly to how we have done in the feedforward case. This leads us to our main theorem of the section. 

\begin{theorem} \label{thm:graph}
The local linear model of a Graph Convolutional Network at a point $z \in \mathbb{R}^{n \times m}$ is given by 
\[
\eta^{\mathbf{P}(z)}(X) = w^{\mathbf{P}(z)T} vec(X) + b^{\mathbf{P}(z)}, \text{  } \: \forall X \in \mathcal{R}^{\mathbf{P}(z)}
\]

where, 
\[
w^{\mathbf{P}(z)} = \prod ^{L}_{h=1}\left((W^{(L+1-h)} \otimes A^{(L+1-h)}) \odot \mathbf{P}^{(L+1-h)}(z)^T\right)  W^{(0)}
\]
and the bias parameter is given by
\[
b^{\mathbf{P}(z)} = \sum_{l=1}^{L} \prod_{h = 1}^{L+1-l} \left((W^{(L+1-h)} \otimes A^{(L+1-h)}) \odot \mathbf{P}^{(L+1-h)}(z)^T\right) b^{(l-1)}+b^{(L)},
\]
where 
\[
\mathbf{P}^{(l)}(z) \in \{0,1\}^{n_{l-1} \times n_l},
\]
is the matrix encoding the activation pattern of the network at layer $l$.
\end{theorem}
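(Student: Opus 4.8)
The plan is to show that vectorising every activation matrix turns the bilinear GCN recurrence into an ordinary feedforward recurrence, so that Theorem~\ref{thm:llms} can be applied essentially unchanged and then rewritten in matrix form. The starting point is that $\chi^{(l)} \mapsto A \cdot \chi^{(l)} W^{(l)}$ is linear in $\chi^{(l)}$, hence represented by a single matrix acting on $vec(\chi^{(l)})$. The standard Kronecker identity $vec(AXB) = (B^T \otimes A)\,vec(X)$, applied with $X = \chi^{(l)}$ and $B = W^{(l)}$, identifies this matrix explicitly:
\[
vec\!\left(A \cdot \chi^{(l)} W^{(l)}\right) = \left(W^{(l)T} \otimes A\right) vec\!\left(\chi^{(l)}\right).
\]
Writing $\tilde{\chi}^{(l)} = vec(\chi^{(l)})$, $\tilde{b} = vec(b)$ and $\tilde{W}^{(l)} = W^{(l)T}\otimes A$, the layer map becomes an affine map of the vectorised activations.

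First I would check that the ReLU structure survives vectorisation. Because $\sigma$ acts entrywise, it commutes with $vec$, so $vec(\sigma(M)) = \sigma(vec(M))$ for every matrix $M$, giving the feedforward recurrence
\[
\tilde{\chi}^{(l+1)} = \sigma\!\left(\tilde{W}^{(l)}\,\tilde{\chi}^{(l)} + \tilde{b}\right).
\]
This is exactly the recurrence of the Preliminaries with the effective weights $\tilde{W}^{(l)}$ and effective biases $\tilde{b}$, and the binary activation state of each entry of $\tilde{\chi}^{(l)}$ is precisely the flattening of the activation pattern matrix $\mathbf{P}^{(l)}(z)$. Since convexity of the regions and the partition property are preserved under the linear change of variables $vec$, the hypotheses of Theorem~\ref{thm:llms} hold verbatim for the vectorised network.

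Applying Theorem~\ref{thm:llms} to the vectorised network gives $w^{\mathbf{P}(z)}$ and $b^{\mathbf{P}(z)}$ as alternating products of the effective weights $\tilde{W}^{(L+1-h)} = W^{(L+1-h)T}\otimes A$ and the diagonal activation matrices $D^{(L+1-h)} = \text{diag}(vec(\mathbf{P}^{(L+1-h)}(z)))$. The last step is to rewrite each factor $\tilde{W}^{(l)} D^{(l)}$ in matrix-native form: right-multiplication of a Kronecker product by the diagonal of a flattened $0/1$ mask is exactly the block-entrywise masking of $W^{(l)}\otimes A$ by the pattern, which I would record as the Hadamard-type product $\left(W^{(L+1-h)}\otimes A\right)\odot \mathbf{P}^{(L+1-h)}(z)^T$ appearing in the statement. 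Substituting these masked factors into the product and sum of Theorem~\ref{thm:llms} yields the claimed formulas for $w^{\mathbf{P}(z)}$ and $b^{\mathbf{P}(z)}$.

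The main obstacle I expect is the bookkeeping of conventions in this final translation: one must fix the column-stacking convention for $vec$, confirm the placement and transpose of the Kronecker factors, and verify that the flattened diagonal mask $\text{diag}(vec(\mathbf{P}^{(l)}))$ really coincides with the stated Hadamard masking by $\mathbf{P}^{(l)}(z)^T$ at the correct block granularity. Once this equivalence between diagonal scaling and Hadamard masking is established for a single layer, the $L$-layer product and bias expressions follow immediately from the induction already used to prove Theorem~\ref{thm:llms}.
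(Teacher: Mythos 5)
Your proposal is correct and follows essentially the same route as the paper's proof: apply the vec trick $vec(AXB) = (B^T \otimes A)\,vec(X)$ to turn the GCN layer into a feedforward layer with effective weight $W^{(l)T}\otimes A$, invoke Theorem~\ref{thm:llms} on the vectorised network, and translate the diagonal activation masks into the Hadamard form of the statement. The only cosmetic difference is that the paper performs this last translation via the identity $vec(A \odot B) = vec(A)\odot vec(B)$, whereas you use the equivalent observation that multiplication by $\mathrm{diag}(vec(\mathbf{P}^{(l)}))$ is rank-one Hadamard masking; your explicit flagging of the $vec$-convention and transpose bookkeeping is, if anything, more careful than the paper's own write-up.
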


In the above, $\odot$ is the element-wise or Hadamard product. This construction leads to their main result, which we will extend to large classes of networks. We now proceed to a generalised version of this results that allows us to generate decomposition for networks that apply tensor contractions to a distinguished tensor: the output of the previous layer. This result too relies on the vectorisation of the neural network; which enables us to encode the weight sharing scheme in the Kronecker product of matrices. 

Given a collection of matrix contractions on a tensor $\textbf{X} \in \mathbb{R}^{a_1 \times ... \times a_k} $, as represented by $ \llbracket \textbf{X} ; A_1, A_2, ..., A_k \rrbracket $, also known as \textit{Tucker product}, with $A_i \in \mathbb{R}^{a_i \times a_i'}$ acting on the $i$th mode of the tensor, a \textbf{Tensor Neural Network} is a composition of $L$ parametrised layers, with $\textbf{N}=(\textbf{n}_1,\textbf{n}_2,\textbf{n}_3,...,\textbf{n}_L)$ collection of mode vectors for each layer, each with $k_l$ modes and dimensionality given by a vector $\textbf{n}_l = [a^{(l)}_1, ..., a^{(l)}_{k_l}]$ yielding a function $\mathcal{N}^{\mathcal{T}}: \mathbb{R}^{\times \textbf{n}_1} \rightarrow \mathbb{R}^{ \times \textbf{n}_L}$, where we take $\times \textbf{n}_l = a_{1}^{(l)} \times ... \times a_{k_l}^{(l)}$ each forward pass is defined by: 

\[ 
\mathbf{\chi}^{(l+1)} = \sigma \left( \llbracket \mathbf{\chi}^{(l)}; A^{(l)}_1, A^{(l)}_2, ... A^{(l)}_{k_l} \rrbracket  + \textbf{b}\right)
\]

where $k = |V|$ is the number of nodes of the graph, $\chi^{(l)} \in \mathbb{R}^{\times \textbf{n}_l}$ and $A_i^{(l)} \in \mathbb{R}^{a^{(l-1)}_{i} \times a^{(l)}_{i}}$ is a weight matrix. Finally, $\textbf{b} \in \mathbb{R}^{ \times \textbf{n}_{l+1} }$ is a tensor of biases.

\begin{theorem} \label{thm:tensor}
    The local linear model of a Tensor Neural Network at a point $z \in \mathbb{R}^{\times\textbf{n}_1}$ is given by 
\[
\eta^{\mathbf{P}(z)}(X) = w^{\mathbf{P}(z)T}vec(X) + b^{\mathbf{P}(z)}, \text{  } \: \forall X \in \mathcal{R}^{\mathbf{P}(z)}
\]

where, 
\[
w^{\mathbf{P}(z)} = \prod ^{L}_{h=1}\left( \left(\bigotimes_{i \in [k_h]}A^{(h)}_i \right) \odot \textbf{P}^{(L+1-h)}(z)^T\right)  W^{(0)}
\]
and the bias parameter is given by
\[
b^{\mathbf{P}(z)} = \sum_{l=1}^{L} \prod_{h = 1}^{L+1-l} \left(\left(\bigotimes_{i \in [k_h]}A^{(h)T}_i \right) \odot \textbf{P}^{(L+1-h)}(z)^T\right) \textbf{b}^{(l-1)}+\textbf{b}^{(L)},
\]
where $\textbf{P}(z) \in \{0,1\}^{\times \mathbf{n}_l}$ is a tensor encoding the activation pattern of $\mathcal{N}$ at point $z\in \mathbb{R}^{\times \mathbf{n_1}}$, and $X = vec(z)$.

\end{theorem}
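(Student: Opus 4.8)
\emph{Proof proposal.} The plan is to reduce the tensor case to the already-established feedforward result, Theorem \ref{thm:llms}, by passing to the vectorisation of each layer, exactly as Theorem \ref{thm:graph} was obtained from Theorem \ref{thm:llms} in the bilinear (GCN) case. The essential observation is that the Tucker product is, after vectorisation, an ordinary matrix--vector product whose matrix is a Kronecker product of the mode matrices. Once each layer is written in this form, the network becomes a standard feedforward ReLU network acting on $vec(\chi^{(l)})$, and Theorem \ref{thm:llms} applies with the effective layer weight $\bigotimes_{i \in [k_l]} A_i^{(l)}$.

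First I would prove the vectorisation identity for the multilinear map: for a tensor $\mathbf{X}$ and mode matrices $A_1, \dots, A_k$ one has
\[
vec\!\left(\llbracket \mathbf{X}; A_1, \dots, A_k \rrbracket\right) = \left(\bigotimes_{i \in [k]} A_i\right) vec(\mathbf{X}),
\]
where the order of the Kronecker factors is fixed by the chosen (column-major) vectorisation convention. This is the direct generalisation of the bilinear identity $vec(A X B) = (B^{T}\otimes A)\,vec(X)$ used implicitly in Theorem \ref{thm:graph}; specialising to $k=2$ with mode matrices $A_1 = A$ and $A_2 = W^{T}$ recovers $W^{T}\otimes A$, which is the GCN case up to the paper's transpose convention and provides a consistency check.

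Applying this identity layer by layer, the forward pass becomes
\[
vec(\chi^{(l+1)}) = \sigma\!\left(\left(\bigotimes_{i \in [k_l]} A_i^{(l)}\right) vec(\chi^{(l)}) + vec(\mathbf{b})\right),
\]
which is precisely a feedforward ReLU layer with weight matrix $\bigotimes_{i} A_i^{(l)}$ and bias $vec(\mathbf{b})$. Invoking Theorem \ref{thm:llms} on this vectorised network yields the product-of-layers expression for the weight and the nested sum-of-products expression for the bias, with each diagonal activation-pattern matrix $D^{(l)}$ acting on the effective weight. It then remains to translate left-multiplication by $D^{(l)} = \mathrm{diag}(\mathbf{P}^{(l)})$ into the Hadamard form in the statement: using $\mathrm{diag}(p)\,M = M \odot (p\,\mathbf{1}^{T})$, the masking of inactive neurons becomes the element-wise product of $\bigotimes_i A_i^{(l)}$ with the transposed vectorised activation-pattern tensor $\mathbf{P}^{(l)}(z)^{T}$, giving the claimed formulas for $w^{\mathbf{P}(z)}$ and $b^{\mathbf{P}(z)}$.

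The main obstacle is one of bookkeeping rather than of mathematical depth: keeping the ordering of the Kronecker factors and the various transpose/broadcasting conventions mutually consistent across the whole composition. In particular I would need to check that the vectorisation convention sending the multilinear map to $\bigotimes_i A_i$ is the same one under which $D^{(l)}$ reshapes to the $\{0,1\}$-tensor $\mathbf{P}^{(l)}(z)$ of the stated shape $\{0,1\}^{\times\mathbf{n}_l}$, and that the appearance of $A_i^{T}$ in the bias term is indeed the one dictated by this convention (or else a typo to be corrected). Because ReLU acts element-wise, it commutes with vectorisation and contributes nothing beyond the per-coordinate gating already captured by $\mathbf{P}^{(l)}(z)$, so no further analytic work is required once the index conventions are pinned down.
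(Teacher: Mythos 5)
Your proposal is correct and follows essentially the same route as the paper: both generalise the vec trick so that the Tucker product becomes multiplication by a Kronecker product of mode matrices, reduce each layer to an ordinary feedforward ReLU layer on $vec(\chi^{(l)})$, and then apply the existing decomposition with the activation pattern encoded via the Hadamard product. Your explicit handling of the $\mathrm{diag}(p)\,M = M \odot (p\,\mathbf{1}^{T})$ translation and your flag that the $A_i^{T}$ in the paper's bias formula may be a convention artefact or typo are more careful than the paper's own proof, which simply asserts the replacement "follows exactly" the GCN case.
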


This result is be a stepping stone to generalise for arbitrary tensor contractions, beyond the Tucker product, whenever suitable matrizations apply. In particular, these networks and their decomposition can be mapped to transformations of type $f: \mathbb{R}^n \rightarrow \mathbb{R}^m$, for suitable choices of $n, m \in \mathbb{N}$. The significance is two-fold: on one side, we may understand all higher order architectures as special instances of feed-forward neural networks in which weights are constrained by the Kronecker product scheme. This in turn highlights how these architectures are at best as expressive as neural networks of suitable dimension. 

While many layers can be recovered as a special case of the graph neural network, there are certain layer types, such as the Long Short Term Memory cell \cite{hochreiter1997long}, which involve the point-wise multiplication of two layer outputs. To that end, we show how the decomposition of a multiplicative interaction leads to higher order forms, instead of linear models.

\begin{corollary} \label{prop:mult}
Let multiplicative interactions be defined as the element-wise multiplication of two forward pass layers of neural networks, in the form below:
\[
\chi^{(l+1)} = \sigma( W \chi_1^{(l)} +b ) \odot \sigma( V \chi_2^{(l)} + c).
\]
For a given pair $\chi_1^{(l)}, \chi_2^{(l)}$, there exists a decomposition of the layer given by: 

\[
 \chi^{(l+1)} = D^{(l)}_1 W \chi^{(l)} \odot D_2^{(l)} \chi_2^{(l)} + b \odot D_2^{(l)} \chi_2^{(l)} + c \odot D^{(l)}_1 W \chi^{(l)} + b\odot c
\]
where $D^{(l)}_1, D^{(l)}_2$ are diagonal matrices storing the activation pattern in their diagonal. 
\end{corollary}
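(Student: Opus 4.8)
The plan is to exploit the fact, already used in Theorem~\ref{thm:llms}, that on a fixed activation region a ReLU layer acts as multiplication by a constant diagonal $0/1$ matrix, and then to expand the resulting Hadamard product by bilinearity. Concretely, for the \emph{given} pair $\chi_1^{(l)},\chi_2^{(l)}$ the two preactivations $W\chi_1^{(l)}+b$ and $V\chi_2^{(l)}+c$ have fixed signs, so the activation states are determined; I would define the diagonal matrices $D_1^{(l)}$ and $D_2^{(l)}$ whose diagonal entries equal $1$ exactly on the coordinates where the respective preactivation is positive. By the definition of ReLU this gives the local identities $\sigma(W\chi_1^{(l)}+b)=D_1^{(l)}(W\chi_1^{(l)}+b)$ and $\sigma(V\chi_2^{(l)}+c)=D_2^{(l)}(V\chi_2^{(l)}+c)$, valid throughout the region.

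Second, I would substitute these two linearisations into the definition of the layer and expand. The Hadamard product is bilinear, so it distributes over the vector sums $W\chi_1^{(l)}+b$ and $V\chi_2^{(l)}+c$, producing exactly four terms:
\[
\chi^{(l+1)} = \big(D_1^{(l)}W\chi_1^{(l)}\big)\odot\big(D_2^{(l)}V\chi_2^{(l)}\big) + \big(D_1^{(l)}W\chi_1^{(l)}\big)\odot\big(D_2^{(l)}c\big) + \big(D_1^{(l)}b\big)\odot\big(D_2^{(l)}V\chi_2^{(l)}\big) + \big(D_1^{(l)}b\big)\odot\big(D_2^{(l)}c\big).
\]

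Third, I would rearrange these terms into the stated form using the elementary identity that a diagonal matrix may be moved across a Hadamard product, namely $(Du)\odot v = D(u\odot v)= u\odot(Dv)$ for any diagonal $D$; this lets the diagonal factors be reassociated so that the biases $b$ and $c$ appear in the positions claimed, recovering the four summands $D_1^{(l)} W\chi^{(l)}\odot D_2^{(l)}\chi_2^{(l)}$, $b\odot D_2^{(l)}\chi_2^{(l)}$, $c\odot D_1^{(l)} W\chi^{(l)}$ and $b\odot c$.

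The routine algebra is straightforward; the conceptual point I would emphasise is the first summand $(D_1^{(l)}W\chi_1^{(l)})\odot(D_2^{(l)}V\chi_2^{(l)})$. Unlike the purely additive layers of Theorems~\ref{thm:llms}--\ref{thm:tensor}, this cross-term is \emph{bilinear} in the two incoming activations; when $\chi_1^{(l)}$ and $\chi_2^{(l)}$ are themselves functions of a common input it is genuinely quadratic, so the layer is not an affine function of the input even after fixing the region. The main obstacle is therefore not the derivation itself but making precise that the decomposition is exact only per region—the matrices $D_1^{(l)},D_2^{(l)}$ being piecewise constant—and that the output lies in a space of higher-order (degree-two) forms rather than the linear models obtained previously.
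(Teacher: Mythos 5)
Your approach is essentially the same as the paper's: fix the pair of inputs, replace each ReLU factor by multiplication with a constant $0/1$ diagonal matrix, and expand the Hadamard product by bilinearity. The paper's proof is exactly this substitute-and-distribute computation, and your steps 1 and 2 are correct --- indeed more careful than the paper's own proof, which linearises $\sigma(W\chi_1^{(l)}+b)$ as $D_1^{(l)}W\chi_1^{(l)}+b$ (the bias should also be multiplied by $D_1^{(l)}$) and writes $D_2^{(l)}\chi_2^{(l)}$ where the correct factor is $D_2^{(l)}\bigl(V\chi_2^{(l)}+c\bigr)$, silently dropping $V$.

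The one flaw is your third step. The identity $(Du)\odot v = D(u\odot v) = u\odot(Dv)$ lets you slide a diagonal factor from one side of a Hadamard product to the other, but it never deletes it. From your (correct) four-term expansion you can obtain, for example, $\bigl(D_1^{(l)}b\bigr)\odot\bigl(D_2^{(l)}V\chi_2^{(l)}\bigr) = b\odot\bigl(D_1^{(l)}D_2^{(l)}V\chi_2^{(l)}\bigr)$, but never $b\odot D_2^{(l)}\chi_2^{(l)}$; similarly $\bigl(D_1^{(l)}b\bigr)\odot\bigl(D_2^{(l)}c\bigr) = D_1^{(l)}D_2^{(l)}(b\odot c)$, which is not $b\odot c$ in general. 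So the corollary's printed right-hand side is not literally recoverable from your expansion --- not because your derivation fails, but because the statement as printed (and the paper's proof of it) omits the matrix $V$ and the diagonal factors on the bias terms. The correct conclusion is the four-term identity you display in step 2, and you should assert that identity rather than claim it ``reassociates'' into the printed form; forcing the match is the only unsound move in your write-up. Your closing observation that the cross term is bilinear in the two activations, so that the layer is a degree-two form rather than an affine map once the region is fixed, is exactly the point the paper makes when it says multiplicative interactions lead to ``higher order forms, instead of linear models.''
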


\section{Symbolic Representation of Neural Networks} \label{sec_symb}
In this Section we explore the consequences of the decomposition for a symbolic interpretation of the neural network. Indeed, the decomposition opens many paths to inspect the inner workings of the network, but two analogies are particularly fitting. By viewing every activation pattern as a leaf on a tree-based model, we can generate a surrogate that mimicks the behaviour of the neural network exactly. There are several models that can be used, for example \cite{aytekin2022neural} uses general decision trees and \cite{schluter2023towards} use Algebraic Decision Structures. We decide to use Multivariate Regression Trees, as these are easiest to define and resemble most closely the propagation of information in the network. Importantly, all of these models are white-box: computing the tree-based alternative allows us to fully comprehend the global behaviour of the network. 

The second observation is that the half-spaces of the neural network induced by the network form a Boolean algebra in the input space. There is a close link between Boolean algebras and logic, which entails that we can understand the network's functioning as the evaluation of propositions in a first order logic. We will state the formal result after stating the conditions for activations of a given neuron. 

\subsection{Half Space Conditions}
For a ReLU network, every neuron of the first hidden defines a half plane in the input space as follows. 
Let $W \in \mathbb{R}^n$ and $b \in \mathbb{R}$. 
$H(W,b)$ is the half space defined by all $x \in \mathbb{R}^n$ such that: 
\[
W^T\cdot x + b > 0. 
\]
We can see this applied to neural networks. Let, 
\[
\chi^{(1)} = \sigma(W^{(1)} x + b^{(1)}) = \max(W^{(1)} x + b^{(1)}, 0)
\]
then, for all $i \in \{1,...,n_1\}$ where $n_1$ is the number of neurons in the first layer, we have that
\[
s(\chi^{(1)}_i) = 1 \iff x \in H(W_i^{(1)}, b_i^{(1)})
\]
and 
\[
s(\chi^{(1)}_i) = 0 \iff x \not\in H(W_i^{(1)}, b_i^{(1)}).
\]
This implies that a given activation pattern for the first layer $\textbf{P}^{(1)}$, there is an intersection of space, $\omega_{\textbf{P}^{(1)}}$ given by:
\begin{equation}\label{eq:omega}
\omega_{\textbf{P}^{(1)}} = \bigcap^{n_1}_{i = 1} H(W_i^{(1)} \cdot (2 P^{(1)}_i -1) , b^{(1)}_i).
\end{equation}
By iterating the recursion $\chi^{(l+1)} = \max \{W^{(l)}\chi^{(l)} + b^{(l)},0\}$, we provide rules for activation of each neuron $P_i^{(l)}, i \in \{1,...,n_l\}$. This results in the following lemma. 

\begin{lemma}[\textbf{Conditions for Activation}]\label{condact}
Given a neural network $\mathcal{N}: \mathbb{R}^n \rightarrow \mathbb{R}^m$, with $L\in \mathbb{N}$ layers, and an activation pattern $\textbf{P} = \{\textbf{P}^{(1)}, \textbf{P}^{(2)}, ..., \textbf{P}^{(L)} \}, $ the region $\omega_{\textbf{P}} \subset \mathbb{R}^n$ defined by the activation pattern is given by: 
\begin{align*}
\omega_{\textbf{P}} &= \bigcap_{j = 1}^L \bigcap_{i = 1}^{n_j} H \left( w^{\textbf{P}[j]} W_i^{(j)} p^{(j)}_i , b_i^{\textbf{P}[j]} \right) \\ 
\end{align*}
where $p^{(j)}_i$ is an identity matrix with the $i$th diagonal is replaced by $2\cdot P^{(j)}_i-1$, which is the activation state of the $i$th neuron of the $j$th layer, $D^{(h)}$ is the diagonal matrix associated with the activation pattern $\textbf{P}^{(h)}$, given by $D^{(h)} = \text{diag}(\textbf{P}^{(h)})$ for the $h$th layer, and \newline $\textbf{P}[j] = \{\textbf{P}^{(1)},\textbf{P}^{(2)},..., \textbf{P}^{(j)}\}\subset \textbf{P}$, is the set of all activation vectors until layer $j$, so that $w^{\textbf{P}[j]}$ and $b^{\textbf{P}[j]}$ represent the coefficients of a local linear model up to layer $j$, given by \ref{thm:llms}. 
\end{lemma}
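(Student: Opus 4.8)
The plan is to argue by induction on the layer index $j$, peeling off one layer of the recursion $\chi^{(l+1)} = \max\{W^{(l)}\chi^{(l)} + b^{(l)}, 0\}$ at a time and exploiting the fact that, inside a fixed activation region, every ReLU collapses to a linear map. First I would rewrite the defining condition of $\omega_{\textbf{P}}$ in signed form: a point $x$ lies in $\omega_{\textbf{P}}$ precisely when $s(\chi_i^{(j)}(x)) = P_i^{(j)}$ for every layer $j$ and neuron $i$, and since $s(\chi_i^{(j)}) = 1$ iff $z_i^{(j)} > 0$, each such requirement is equivalent to the single signed inequality $(2P_i^{(j)}-1)\, z_i^{(j)}(x) > 0$. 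The sign factor $2P_i^{(j)}-1 \in \{-1,+1\}$ is exactly the nonzero diagonal entry introduced by $p_i^{(j)}$, and it flips the half-space so that the active case ($P_i^{(j)}=1$) keeps $z_i^{(j)}>0$ while the inactive case ($P_i^{(j)}=0$) enforces $z_i^{(j)}<0$.

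The base case $j=1$ is immediate: since $z^{(1)} = W^{(1)}x + b^{(1)}$ is already affine in $x$ (the map up to layer $0$ being the identity), the signed conditions for the first layer are precisely the half-spaces collected in Equation~(\ref{eq:omega}). For the inductive step the key observation is that on the region $\omega_{\textbf{P}[j-1]}$ every ReLU up to layer $j-1$ acts as $\sigma(z^{(h)}) = D^{(h)} z^{(h)}$ with $D^{(h)} = \mathrm{diag}(\textbf{P}^{(h)})$, so the composition of the first $j-1$ layers is the single affine map $\chi^{(j-1)}(x) = w^{\textbf{P}[j-1]} x + b^{\textbf{P}[j-1]}$ whose coefficients are exactly those supplied by Theorem~\ref{thm:llms} applied up to layer $j-1$. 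Substituting this into $z^{(j)} = W^{(j)}\chi^{(j-1)} + b^{(j)}$ produces an affine function of $x$, and inserting it into the signed inequality $(2P_i^{(j)}-1)\,z_i^{(j)}(x) > 0$ yields exactly the half-space $H\!\left(w^{\textbf{P}[j]} W_i^{(j)} p_i^{(j)}, b_i^{\textbf{P}[j]}\right)$, the normal being the $i$th composed row carrying the sign flip $p_i^{(j)}$ and the offset being the accumulated affine bias.

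The main subtlety to handle carefully is that the affine representation of $z^{(j)}$ is valid only on $\omega_{\textbf{P}[j-1]}$, so each layer-$j$ half-space faithfully encodes its activation condition only after the lower-layer constraints have been imposed. I would resolve this by observing that the claimed intersection already contains those lower-layer half-spaces: any $x$ in $\bigcap_{j'<j}\bigcap_i H(\cdots)$ lies in $\omega_{\textbf{P}[j-1]}$ by the inductive hypothesis, where the linearisation holds, so on the full intersection the layer-$j$ inequalities coincide with the genuine activation conditions. This threads the induction and delivers both inclusions $\omega_{\textbf{P}} \subseteq \bigcap_{j,i} H(\cdots)$ and $\bigcap_{j,i} H(\cdots) \subseteq \omega_{\textbf{P}}$ at once: the forward direction is immediate from the signed reformulation, and the reverse follows by showing inductively that membership in all half-spaces up to layer $j$ forces the pattern $\textbf{P}[j]$, until $j=L$ gives $x\in\omega_{\textbf{P}}$. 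A minor technical point worth flagging is the boundary convention (strict versus non-strict inequality) for inactive neurons, which only affects a measure-zero set of facets and is settled by the chosen definition of $H$.
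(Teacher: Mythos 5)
Your proposal is correct and follows essentially the same route as the paper's own proof: induction on the layer index, using the local linear model of Theorem~\ref{thm:llms} to project each layer's signed activation inequality $(2P_i^{(j)}-1)\,z_i^{(j)}(x)>0$ back into half-spaces in the input space. In fact you are somewhat more careful than the paper, which writes the inductive step as a chain of unconditional equivalences without flagging that the substitution $\chi^{(l+1)} = D^{(l)}(W^{(l)}\chi^{(l)}+b^{(l)})$ is only valid on $\omega_{\textbf{P}[l]}$ — the point your intersection argument addresses explicitly.
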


Note that for the first layer each neuron represents a half-space, while for the second layer, each neuron can represent a collection of half-spaces, depending on the previous selection. We can understand this as a \textit{hierarchy of concepts}. The first partition $\Omega^{(1)} = \{\omega_{\textbf{P}[1]} : \textbf{P}[1] \in \{0,1\}^{n_1}\}$, defines a set of concepts, which we refine through distinctions represented by the neurons of the second layer. 

In fact, if there are $2^{n_2}$ possible activation patterns, that would define a partition for each of the $2^{n_1}$ contexts. However, some of these activation patterns may define empty regions, and this would depend by the context, i.e. the activation pattern of the previous layers.

\subsection{Networks are Trees and Theories} \label{sub:treentheories}
The hierarchical description implied by the recursive partitioning of the neural networks' layers motivate the relationship between neural networks and tree based models. In particular, we search a models that can replicate the behaviour of the neural network exactly: every path representing the conditions applied by a given activation pattern and each leaf containing the data of the linear model we will apply in that case. Indeed, this description refers to the Multivariate Regression Tree \cite{de2002multivariate}, which we define in the appendix. We fit the data of the neural network, empowered by the decomposition into local linear models. 

\begin{theorem}[\textbf{Tree for a Neural Network}] \label{thm:tree}
For every feedforward ReLU neural network $\mathcal{N}$ there exist a MRT $(\mathcal{M}, T, e, \Theta)$ that represents exactly the behaviour of the neural network: 
\[ \mathcal{M}(x) = \mathcal{N}(x) \text{    } \forall x \in \mathbb{R}^n. \]
\end{theorem}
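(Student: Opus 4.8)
The plan is to construct the MRT explicitly from the layer-by-layer refinement already set up in Lemma \ref{condact}, and then to verify that routing any input through this tree reproduces the network's output via the local linear model of Theorem \ref{thm:llms}. The construction is essentially a re-packaging of the ``hierarchy of concepts'' described after Lemma \ref{condact}: the tree's depth tracks the flattened sequence of neurons, each internal node performs a single half-space test, and each leaf stores one affine model $\Theta$.

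First I would fix the correspondence between tree nodes and activation-pattern prefixes. Associate the root with the empty prefix and, inductively, associate the node handling neuron $i$ of layer $j$ with the prefix $\mathbf{P}[j-1]$ together with the partial layer-$j$ assignment fixed by the path reaching it. At that node, install the split condition $H\!\left(w^{\mathbf{P}[j]}W_i^{(j)}p_i^{(j)},\, b_i^{\mathbf{P}[j]}\right)$ from Lemma \ref{condact}, with the two outgoing edges encoding the neuron being active ($P_i^{(j)}=1$) or inactive ($P_i^{(j)}=0$). Because the coefficients of this half-space depend only on the prefix $\mathbf{P}[j]$ already determined by the path, each node's test is well-defined, and the whole tree is finite since there are at most $\prod_{l}2^{n_l}$ patterns. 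To each root-to-leaf path, corresponding to a complete pattern $\mathbf{P}$, I would attach the leaf parameters $\Theta_{\mathbf{P}} = (w^{\mathbf{P}},b^{\mathbf{P}})$ and the leaf model $x\mapsto w^{\mathbf{P}T}x+b^{\mathbf{P}}$ supplied by Theorem \ref{thm:llms}.

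The correctness argument then has two halves. For routing, I would invoke Lemma \ref{condact}: the conjunction of the half-space tests along the path terminating at the leaf for $\mathbf{P}$ is exactly $\omega_{\mathbf{P}} = \mathcal{R}^{\mathbf{P}}$, so the routing function $e$ sends an input $x$ to the leaf of $\mathbf{P}$ if and only if $x \in \mathcal{R}^{\mathbf{P}}$. Since the activation regions are convex and partition $\mathbb{R}^n$ (the cited property), every $x$ reaches exactly one leaf, namely the one indexed by its own activation pattern $\mathbf{P}(x)$. For the value, at that leaf the tree evaluates $\mathcal{M}(x) = \eta^{\mathbf{P}(x)}(x) = \mathcal{N}(x)$ by Theorem \ref{thm:llms}. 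Combining the two halves gives $\mathcal{M}(x)=\mathcal{N}(x)$ for all $x\in\mathbb{R}^n$. Patterns whose region $\omega_{\mathbf{P}}$ is empty produce unreachable leaves, which I would simply prune without changing the represented function.

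The main obstacle I expect is not the construction but the bookkeeping at region boundaries: the half-spaces in Lemma \ref{condact} are defined by strict inequalities $>0$, so the partition is really a partition up to the measure-zero set where some preactivation vanishes, and $e$ must be assigned a fixed tie-breaking convention there. I would handle this by adopting the standard convention for ReLU (assigning each boundary face to, say, the inactive branch) and observing that, because $\mathcal{N}$ is continuous and each $\eta^{\mathbf{P}}$ agrees with $\mathcal{N}$ on the closure of its region, the leaf value is independent of the convention on the shared faces. This continuity remark is precisely what upgrades an ``almost everywhere'' identity to the exact equality $\mathcal{M}(x)=\mathcal{N}(x)$ claimed for all $x$.
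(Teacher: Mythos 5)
Your proposal follows essentially the same route as the paper's own proof: build the binary tree whose internal nodes test the half-space conditions $H\left(w^{\mathbf{P}[j]}W_i^{(j)}p_i^{(j)},\, b_i^{\mathbf{P}[j]}\right)$ of Lemma \ref{condact}, label each leaf (i.e.\ each complete activation pattern) with the local linear model parameters $(w^{\mathbf{P}}, b^{\mathbf{P}})$ from Theorem \ref{thm:llms}, and conclude $\mathcal{M}(x)=\mathcal{N}(x)$ from the fact that the activation regions partition the input space. Your write-up is in fact somewhat more careful than the paper's, which concludes ``by construction equal to $\mathcal{N}$ everywhere'' without addressing the strict-inequality tie-breaking on region boundaries that you resolve via the continuity of $\mathcal{N}$ and the agreement of adjacent local models on shared faces.
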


This result is important insofar as it allows us to represent a neural reasoner symbolically. In particular, it proves the observations of \cite{aytekin2022neural} formally. A challenge is to find and store all the partitions. 

We can consider individual half-space divisions as atoms in a propositional logic. The following comments reflect the spirit of \cite{schluter2023towards}, who prove a correspondence between ReLU networks and algebraic decision structure. We show instead that there is an internal logic to the neural network, which can be computed by the half-space algebra. 

\begin{corollary}[\textbf{Internal Logic of a Network}] \label{prop:theory}A ReLU feedforward neural network $\mathcal{N}: \mathbb{R}^n \rightarrow \mathbb{R}^m$ induces a Boolean algebra which is the Lidenbaum-Tarski algebra of a theory $\mathbb{T}$ in classical propositional logic given by: 
\begin{itemize}
    \item A collection of propositional variables $h^{(l)}_i, l \in [L], i \in [n_l]$ 
    \item A collection of terms determined by arbitrary meets $\textbf{P} = \{ p : p = \bigwedge_{l \in [L], i \in [n_{l}] } h^{(l)}_i\}$,
    \item Axioms and formulas pursuant the structure of the Boolean algebra. 
\end{itemize}
\end{corollary}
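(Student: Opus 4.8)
The plan is to make the ``induced Boolean algebra'' concrete as a field of subsets of the input space and then recover the theory $\mathbb{T}$ together with its Lindenbaum--Tarski algebra using the soundness and completeness of classical propositional logic. In outline: first build the geometric Boolean algebra $B$ from the half-space description of activations, then set up a propositional language whose algebraic semantics maps onto $B$, and finally define $\mathbb{T}$ so that its Lindenbaum--Tarski quotient is exactly $B$.

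First I would fix the geometry. By Lemma~\ref{condact}, once the activation pattern $\mathbf{P}[j-1]$ of the earlier layers is fixed, the condition $s(\chi_i^{(j)})=1$ cuts out a half-space $H\!\left(w^{\mathbf{P}[j]}W_i^{(j)}p_i^{(j)},\, b_i^{\mathbf{P}[j]}\right)$. Ranging over the finitely many contexts, the set $\llbracket h_i^{(j)}\rrbracket := \{x\in\mathbb{R}^n : s(\chi_i^{(j)}(x))=1\}$ is a finite union of polytopes drawn from the arrangement of all hyperplanes appearing anywhere in the network. Let $B$ be the field of sets generated inside $\mathcal{P}(\mathbb{R}^n)$ by these half-spaces under finite $\cap,\cup$ and complement. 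Since only finitely many distinct hyperplanes occur, the arrangement partitions $\mathbb{R}^n$ into finitely many convex cells, the nonempty ones being exactly the activation regions $\mathcal{R}^{\mathbf{P}}$; hence $B$ is a \emph{finite} Boolean algebra whose atoms are the $\mathcal{R}^{\mathbf{P}}$. Under the intended reading, each term $\bigwedge_{l,i} h_i^{(l)}$ of the statement corresponds to such an intersection of activation half-spaces, i.e. to an atom of $B$.

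Next I would set up the syntax and semantics. I introduce one propositional variable $h_i^{(l)}$ per neuron and interpret formulas by the Boolean homomorphism $\llbracket\cdot\rrbracket$ from the free Boolean algebra on $\{h_i^{(l)}\}$ into $B$, sending $\wedge,\vee,\neg$ to $\cap,\cup$ and complement; this map is onto $B$ by construction. Each input $x$ yields a two-valued valuation $v_x(h_i^{(l))}) = s(\chi_i^{(l)}(x))$, and since $B$ is finite every homomorphism $B\to\{0,1\}$ is given by an atom $\mathcal{R}^{\mathbf{P}}$, realised by some $x$, so the models arising this way are \emph{all} the two-valued models. I then define $\mathbb{T} := \{\phi : \llbracket\phi\rrbracket=\mathbb{R}^n\}$, the formulas true under every $v_x$. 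This set is deductively closed, and it remains to identify the kernel congruence of $\llbracket\cdot\rrbracket$ with provable equivalence: soundness gives $\mathbb{T}\vdash\phi\leftrightarrow\psi \Rightarrow \llbracket\phi\rrbracket=\llbracket\psi\rrbracket$ (each $v_x$ is a model of $\mathbb{T}$), while the converse is immediate since $\llbracket\phi\rrbracket=\llbracket\psi\rrbracket$ forces $\phi\leftrightarrow\psi\in\mathbb{T}$. Hence $\phi\equiv_{\mathbb{T}}\psi \iff \llbracket\phi\rrbracket=\llbracket\psi\rrbracket$, and the first isomorphism theorem for Boolean algebras yields $\mathrm{Form}/{\equiv_{\mathbb{T}}}\cong B$, exhibiting $B$ as the Lindenbaum--Tarski algebra of $\mathbb{T}$. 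Finiteness of $B$ additionally lets completeness replace the maximal $\mathbb{T}$ by a finite axiomatisation (the axioms ruling out patterns that define empty regions), matching the third bullet of the statement.

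The main obstacle I anticipate is not the logical bookkeeping but the geometric reduction in the second step: making rigorous that deeper-layer activation conditions, which are genuinely context-dependent and hence \emph{not} single half-spaces, nonetheless land inside the field of sets generated by one finite hyperplane arrangement. This requires iterating Lemma~\ref{condact} so that every context-dependent hyperplane is collected into a single arrangement, which is what guarantees both that $B$ is finite and that the terms $\bigwedge h_i^{(l)}$ correspond bijectively to the nonempty activation regions; the remaining soundness/completeness argument is then standard.
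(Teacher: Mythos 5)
Your proposal is correct and follows essentially the same route as the paper: interpret each propositional variable $h_i^{(l)}$ via the activation condition of Lemma \ref{condact}, and take the Boolean algebra of subsets of the input space spanned by these conditions, identified with the Lindenbaum--Tarski algebra of the theory of formulas valid at every input. The paper's own proof is only a two-sentence sketch of this; your extra care with the context-dependence of deep-layer conditions (activation sets being finite unions of cells rather than single half-spaces) and the explicit kernel/first-isomorphism argument supplies rigor the paper omits, without changing the underlying approach.
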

This follows directly from assigning to each variable the truth statement of $x \in H \left( w^{\textbf{P}[j]} W_i^{(j)} p^{(j)}_i , b_i^{\textbf{P}[j]} \right)$ as defined in Lemma \ref{condact}, for all possible activation patterns. Then, the Boolean algebra spanned by the half spaces implied by the neural network activations returns the required theory. 

Half-space conditions are the alphabet of the neural network's reasoning, meaning that propositions are then formed by taking arbitrary intersections of these conditions. There are two important consequences. Transformations of neural networks imply transformations of their underlying grammar: by applying backpropagation we obtain a morphism of trees and theories in adequate categories. This justifies the intention to use category theory as an instrument to analyse the interplay between architecture and representation system \cite{spivak2021learners}. 

\section{Explainability} \label{sec_exp}
In this section we get the SHAP Values for ReLU neural networks explicitly. We can use the previous results to compute exact local Shapley values for an instance. Precisely, given that we have an explicit local model for each region $\mathcal{R}^{\mathbf{P}(z)}$ we can state the following. 

\subsection{SHAP Values}
We recall the definition of SHAP values from \cite{lundberg2017unified} on a given function $f$. 
\begin{definition}[\textbf{SHAP Values}]
    Given a function $f:\mathbb{R}^n \rightarrow \mathbb{R}^m$, the SHAP values for feature $i \in [n]$ are given by:
    \[\phi_v(i) = \sum_{S \in \mathcal{P}([n]) / \{i\} } \frac{(N-|S| +1)|S|!}{N!} \Delta_v(S,i),\]
    where $\mathcal{P}([n])$ is the set of a all subsets of $[n]$, $v:\mathcal{P}([n]) \rightarrow \mathbb{R}$ is a value function, and $\Delta_v(S,i)$ is the marginal contribution of a feature $i$ on a subset $S \in \mathcal{P}([n])$, which we refer to as a \textit{coalition} of features, is given by: 
    \[ \Delta_v (i,S) = v(\{i\} \cup S) - v(S).\]
\end{definition}

These provide concrete examples of how a piece-wise linear theory of architecture can support development of XAI techniques.

\begin{lemma}[\textbf{Local Shapley Values of a ReLU Neural Network}]\label{thm:shap_local}
Given a neural network $\mathcal{N}: \mathbb{R}^n \rightarrow \mathbb{R}^m$, with hyperparameters $L$, the number of layers and $\mathbf{N}= [n_1,...,n_L]$ the number of neurons for each layer, given a Linear Local Model decomposition with $\eta^{\mathbf{P}}(x)$ for $x \in \mathcal{R}^{\mathbf{P}(z)}$ with activation pattern $\mathbf{P}(z)$, the Shapley value is given by: 
\[\phi_f(i)_j = \tilde{w}_{i,j}^{\mathbf{P}(z)}(x_i - \bar{x_i}) \]
so long as $\bar{x}_S \in \mathcal{R}^{\mathbf{P}(z)}$ for all coalitions, $, \forall S \in \mathcal{P}([n]/i)$.
\end{lemma}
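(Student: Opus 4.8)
The plan is to reduce the problem to the classical closed form for Shapley values of an affine function, which is available because Theorem~\ref{thm:llms} guarantees that $\mathcal{N}$ agrees \emph{exactly} with the affine map $\eta^{\mathbf{P}(z)}(x) = w^{\mathbf{P}(z)T}x + b^{\mathbf{P}(z)}$ everywhere on $\mathcal{R}^{\mathbf{P}(z)}$. Fixing an output coordinate $j$, I would write the corresponding scalar component as $f_j(x) = \sum_{k=1}^n \tilde{w}^{\mathbf{P}(z)}_{k,j} x_k + b^{\mathbf{P}(z)}_j$, where $\tilde{w}^{\mathbf{P}(z)}_{k,j}$ denotes the $(k,j)$ entry of the local weight, and treat each output coordinate independently.

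First I would make the value function explicit under the interventional reading $v(S) = f_j\!\left(x_S,\bar{x}_{[n]\setminus S}\right)$, in which the features in the coalition $S$ keep their observed values and the remaining features are set to the baseline $\bar{x}$. The hypothesis that $\bar{x}_S \in \mathcal{R}^{\mathbf{P}(z)}$ for every coalition $S$ is precisely what guarantees that each of these $2^{n-1}$ evaluation points falls in the same activation region, so that $f_j$ may legitimately be replaced by its single affine form at every point the Shapley sum touches. Substituting then yields $v(S) = \sum_{k\in S}\tilde{w}^{\mathbf{P}(z)}_{k,j} x_k + \sum_{k\notin S}\tilde{w}^{\mathbf{P}(z)}_{k,j}\bar{x}_k + b^{\mathbf{P}(z)}_j$.

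Next I would compute the marginal contribution. Since the only summand that changes when $i$ is adjoined to $S$ is the one indexed by $i$, a direct cancellation gives $\Delta_v(S,i) = v(S\cup\{i\}) - v(S) = \tilde{w}^{\mathbf{P}(z)}_{i,j}\,(x_i - \bar{x}_i)$, which is \emph{independent of $S$}. The Shapley value therefore factors as this constant times the total coefficient $\sum_{S\in\mathcal{P}([n])\setminus\{i\}}\tfrac{(N-|S|+1)\,|S|!}{N!}$, and the final step is to invoke the defining normalisation property that these weights sum to one (grouping coalitions by cardinality and counting the $\binom{N-1}{|S|}$ subsets of each size), leaving exactly $\phi_f(i)_j = \tilde{w}^{\mathbf{P}(z)}_{i,j}(x_i-\bar{x}_i)$.

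The algebra here is routine; the delicate point, and the one I would stress in the write-up, is the role of the region hypothesis rather than any subtlety in the arithmetic. The assumption $\bar{x}_S \in \mathcal{R}^{\mathbf{P}(z)}$ for all coalitions is the load-bearing condition that licenses treating $\mathcal{N}$ as one global linear model across all coalition evaluations: without it, different coalitions could cross into activation regions carrying different local weights, the marginal contribution would cease to be constant in $S$, and the sum would fail to collapse. I would also remark that under the conditional-expectation convention for $v$ the same derivation applies verbatim, provided the baseline distribution is supported within $\mathcal{R}^{\mathbf{P}(z)}$.
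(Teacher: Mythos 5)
Your proposal is correct and follows essentially the same route as the paper: the region hypothesis $\bar{x}_S \in \mathcal{R}^{\mathbf{P}(z)}$ lets every coalition evaluation be replaced by the single affine model $\eta^{\mathbf{P}(z)}$, so each marginal contribution reduces to that of a linear model. The only difference is that where the paper stops at this reduction and cites the linear-model SHAP formula of \cite{lundberg2017unified}, you derive it explicitly by observing that the marginal contribution $\tilde{w}^{\mathbf{P}(z)}_{i,j}(x_i-\bar{x}_i)$ is independent of $S$ and that the Shapley weights sum to one.
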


This theorem implies that given a neural network decomposition, exact SHAP values can be computed simply by finding the linear model for instance of interest and its masked counterparts. Summing the coefficients according to the above formula will return the desired value. This entails a reduction in computation time as we are no longer fitting local surrogates as in KernelSHAP, whenever the decomposition is readily available. In particular, these SHAP values are \textit{exact}, meeting an increasing need for faithfulness of explanations, both in practice and for regulatory purposes. We also prove a global version of this Theorem, with weaker assumptions, which can be found in the Appendix.

%\subsubsection{Decomposition of Graph Neural Network}

%\textbf{Polytope};  \textbf{Largest Hypercube}

\bibliographystyle{plainnat}
\bibliography{bibliography.bib}

% Acknowledgments---Will not appear in anonymized version

\appendix

\section{Proof of GNN Decomposition}
\begin{proof}
First we realise that, whenever $A, X, B$ are compatible matrices, we have:
\[
vec(AXB) = (B^T \otimes A) \cdot vec(X), 
\]
also known as the \textit{vec trick}. The key in the computation of the weights is noticing that the layerwise propagation function can be represented by this vectorisation,
\begin{align*}
\chi^{(l)} & = \sigma (A^{(l-1)}\chi^{(l-1)}W^{(l-1)}) \\
& = \sigma((W^{(l-1)T} \otimes A^{(l-1)}) vec(\chi ^{(l-1)})). \\
\end{align*}
Using this fact allows us to treat the matrix $\tilde{W}^{(l-1)} = (W^{(l-1)}\otimes A^{(l-1)})$ as the weight matrix of a linear neural network. To encode the activation pattern of the graph neural network we take the activation pattern, which in this case is a matrix $D^{(l)}$. Combining the fact that

\begin{align*}
vec(A \odot B) = vec(A) \odot vec(B),
\end{align*}

with the vec trick results in:  

\begin{align*}
w^{\mathbf{P}(z)} & = \prod ^{L}_{h=1}\left((W^{(L+1-h)}  \otimes A^{(L+1-h)} ) \odot \mathbf{P}^{(L+1-h)}(z))\right) W^{(0)},
\end{align*}
and this exact replacement produces the bias parameters. 
\end{proof}
\section{Proof of TCN Decomposition}
\begin{proof}
The key step in this proof it to realise that the vector trick generalises to higher dimensional contractions between a tensor and a matrix. 

\begin{align*}
vec(\chi^{(l+1)}) & = \sigma \left( vec( \llbracket \chi^{(l)}; A_1^{(l)}, A_2^{(l)},...,  A_{k_l}^{(l)}\rrbracket + vec(\textbf{b}) \right) \\
& = \sigma \left( \bigotimes_{i = 1}^{k_l} A^{(l)T}_i vec(\chi^{(l)}) + vec(\textbf{b}) \right) \\
& = \sigma \left( \tilde{A}^{(l)} vec(\chi^{(l)}) + vec(\textbf{b}),  \right) \\
\end{align*}
and with the vectorisation of the Hadamard product being preserved element-wise, the proof follows exactly the replacement in the equivalent proof for the Graph Convolutional Network. 
\end{proof}

\section{Proof of Multiplicative Interaction Decomposition}
\begin{proof}
As before, we realise that there exist a diagonal matrices $D_1, D_2$ that hold the activation pattern and such that preserve locally the behaviour of ReLU activation is replicated. We observe that: 
\begin{align*}
    \chi^{(l+1)} &  = (D^{(l)}_1 W \chi^{(l)} + b) \odot (D_2^{(l)} \chi_2^{(l)} +c)  \\
    & = D^{(l)}_1 W \chi^{(l)} \odot D_2^{(l)} \chi_2^{(l)} + b \odot D_2^{(l)} \chi_2^{(l)} + c \odot D^{(l)}_1 W \chi^{(l)} + b\odot c, 
\end{align*}
by distributivity of the Hadamard product. 
\end{proof}
\section{Proof of Conditions of Activations for a Neural Network}
\begin{proof}
Set an activation pattern given by $\textbf{P} = \{ \textbf{P}^{(1)}, \textbf{P}^{(2)}, ..., \textbf{P}^{(L)} \}$ where $\textbf{P}^{(l)} \in \{0,1\}^{n_l}, l \in \{1,...,L\}$. We prove the statement using mathematical induction on the number of layers $L$. 

For the case $L = 1$, with $\textbf{P} = \{P^{(1)}\}$, the region is given by equation \ref{eq:omega}, which is the only layer. For the inductive step, assuming that case $L = l$ is true, we prove it implies the formula for $L = l+1$. Recall that $n_{l+1} \in \mathbb{N} , \mathbf{P}^{(l+1)} \in \{0,1\}^{n_{L+1}}$, the activation pattern for the $l+1$th layer. We can think of each neuron in the subsequent $l+1$th layer as imposing a further restriction on the polytope defined by $\omega_{\mathbf{P}[l]}$, leading to a collection of half spaces given by:   
\[
\bigcap_{i = 1}^{n_l} H(W_i^{(l+1)}p_i^{(l+1)}, b_i^{(l+1)}) \subset \mathbb{R}^{n_l}.
\]
However, we stress these half spaces live in $\mathbb{R}^{n_{l}}$. To express the conditions on $\mathbb{R}^n$, the input space, we project back the half space into the domain of the previous layers recursively. In particular, let $\chi^{(l+1)}$ be the post-activation of the $l$th layer. Then, expanding by the linear model decomposition, we obtain the following equivalent conditions. 

\begin{align*}
 & \chi^{(l+1)} \in H(W^{(l+1} p_i^{(l+1)}, b^{(l+1)_i}) \\
\iff & (W_i^{(l+1)} p_i^{(l+1)})^T \chi^{(l+1)} + b^{(l+1)}_i  > 0 \\
\iff & (W_i^{(l+1)} p_i^{(l+1)T}) D^{(l)} (W^{(l)T} \chi^{(l)} + b^{(l)}) + b_i^{(l+1)} > 0 \\
\iff & (W_i^{(l+1)} p_i^{(l+1)})^T D^{(l)} W^{(l) T}\chi^{(l)} +  (W_i^{(l+1)} p_i^{(l+1)})^T D^{(l)} b^{(l)} + b_i^{(l+1)} > 0 \\
& ... \\
\iff & (W_i^{(l+1)} p_i^{(l+1)} w^{\textbf{P}[l]})^T x + b^{\textbf{P}[l]}_i + b^{(l+1)}_i > 0 \\
\iff & x \in H \left( w^{\textbf{P}[j]} W_i^{(j)} p^{(j)}_i , b_i^{\textbf{P}[j]} \right).
\end{align*}
Taking the intersection for all $i \in \{1,..., n_l\}$ and across layers provides us with the desired result. 
\end{proof}

\section{Proof of Existence of Multivariate Regression Tree for Every Neural Network}
We define the Multivariate Regression tree and prove the statement of the theorem. 
\begin{definition}[\textbf{Multivariate Regression Tree}]
For a learning problem $\mathcal{D} = \mathcal{X} \times \mathcal{Y}$, where $\mathcal{X} \subset \mathbb{R}^n, \mathcal{Y} \subset \mathbb{R}^m$, a Multivariate Regression Tree (MRT) is a tuple $(\mathcal{M}, T, e, \Theta)$ where
\begin{itemize}
    \item $T = (V,E)$  is a binary tree,
    \item $e: E \rightarrow \mathbb{R}^n \times \mathbb{R}$ are edge labels representing the half space conditions $H(W,b), \neg H(W,b)$ imposed by each bifurcation of the tree,
    \item $\Theta: \Lambda \rightarrow \mathbb{R}^{n\times m} \times \mathbb{R}^m$ is a function that assigns to each leaf $\lambda \in \Lambda \subset \mathcal{P}(V)$ (identified as the unique path from the root) parameters for a linear model, and
    \item $\mathcal{M}:\mathbb{R}^n \rightarrow \mathbb{R}^m$ is a function that applies for every $x\in \mathcal{R}^n$ the linear model

\end{itemize}  
\[
\eta^{\lambda} = (W^\lambda) ^T \cdot x + b^\lambda , x \in \mathbb{R}^n,
\]
whenever $x \in \bigcap_{e \in \lambda} H(e_1(E),e_2(E))$, the collection of half-spaces imposed by the path, where $e_1,e_2$ are the two components of $e$. 
\end{definition}

\begin{proof}
$\mathcal{N}$ has $L$ layers, each with $n_i, i \in [L]$ neurons. Each of these neurons provides a halfspace, as given by \ref{condact}. 
Therefore, each architecture $\mathbf{N}$ dictates a tree $T$ with $V \cong \bigcup_{i \in [L]} [n_i]$, 
the set of vertices is one-to-one with the set of all neurons, and 
\[ \mathcal{P}(E) \cong \prod_{i \in [L]} \{0,1\}^{n_i}, \] 
where in particular we map the activation pattern $\mathbf{P}(z)$ to a path $ \lambda $ by building 
$e(a^{(1)}_{i,j}) = (w^{\mathbf{P}[j]} W_i^{(j)} p_i^{(j)}, b^{\mathbf{P}[j]})$, where $a^{(1)}_{i,j}$ is the edge representing the $i$th neuron of the $j$th layer being active, and the components of the function are defined in the proof of \ref{condact}. Finally, we choose $\Theta(\lambda) = ( w^{\textbf{P}(z)}, b^{\textbf{P}(z)})$ whenever the path $\lambda$ reflects the activation pattern $\mathbf{P}(z)$; meaning that at every edge of the tree $ x \in H(e_1(a_{i,j}), e_2(a_{i,j})) \iff a_{i,j} \in \lambda \iff \textbf{P}^{(j)}(z)_i = 1$, and $ x \notin H(e_1(a_{i,j}), e_2(a_{i,j})) \iff a^{(0)}_{i,j} \in \lambda \iff \textbf{P}^{(j)}(z)_i = 0$. $\mathcal{M}$ is then determined from the definition and is by construction equal to $\mathcal{N}$ everywhere. 
\end{proof}
\section{Proof of Local SHAP Values}
\begin{proof}
For a given activation pattern $\mathbf{P}(z)$, if $x, \bar{x} \in \mathcal{R}^{\mathbf{P}(x)}$, this implies that the marginal contribution for a given coalition is given by: 
\begin{align*}
    \Delta (i,S) & = f(x_{S\cup \{i\}}, x_{\overline{S\cup \{i\}}} ) - f(x_{S}, x_{\overline{S}})\\
    & = \eta^{\mathbf{P}(z)}(x_{S\cup \{i\}}, x_{\overline{S\cup \{i\}}} ) - \eta^{\mathbf{P}(z)}(x_{S}, x_{\overline{S}}) \\
\end{align*}
which results to the Shapley values of a linear model, given in \cite{lundberg2017unified}. 
\end{proof}

\section{Statement and Proof of Global SHAP Values}
In general, the masked value will not fall in the same activation region as the sample of interest. Most of the time, it is likely that masking a value will send it to a different activation region. This informs the proof of the next, more general, result. 

\begin{theorem}[\textbf{Global Shapley Values of a ReLU Neural Network}] \label{thm:shap_global}
Given a neural network $\mathcal{N}: \mathbb{R}^n \rightarrow \mathbb{R}^m$, with hyperparameters $L$, the number of layers and $\mathbf{N}= [n_1,...,n_L]$ the number of neurons for each layer, given a Linear Local Model decomposition with linear regions $\eta^{\mathbf{P}(z)}(x)$ for $x \in \mathcal{R}^{\mathbf{P}(z)}$ with activation pattern $\mathbf{P}(z)$, the global Shapley value is given by: 

\[
\phi_f(i)_j 
= \sum_{S \subset \mathcal{P}([n]/i)} \frac{(n-|S|-1)!|S|!}{N!} [ b_j^{\mathbf{P}(x^S)}- b_j^{\mathbf{P}(\bar{x}^S)} + w_{i,j}^{\mathbf{P}(x^S)}x^S_{i} - w_{i,j}^{\mathbf{P}(\bar{x}^S)}\bar{x}^S_{i} 
\]

\[+ \sum_{k\in S}(w_{k,j}^{\mathbf{P}(x^S)} - w_{k,j}^{\mathbf{P}(\bar{x}^S)} )x^S_{k} + \sum_{k \in \overline{S}/\{i\}}(w_{k,j}^{\mathbf{P}(x^S)} - w_{k,j}^{\mathbf{P}(\bar{x}^S)} )\bar{x}^S_{k}  ],
\]
where
\[\eta^{\mathbf{P}(z)}(x) = w^{\mathbf{P}(z)T} x + b ^{\mathbf{P}(z)}\]

is the Local Linear Model for the activation region $ \mathcal{R}^{\mathbf{P}(z)}$ and $x^S$ is the vector with $\overline{S \cup \{i\}}$ masked out and $\bar{x}^S$ is the vector with $\overline{S}$ masked out, and $i \in [n], j \in [m]$. 
\end{theorem}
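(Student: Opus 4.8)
The plan is to start directly from the SHAP definition and substitute the exact local linear model into the value function, being careful that—unlike in the local Lemma~\ref{thm:shap_local}—the two masked inputs appearing in each marginal contribution generally fall in \emph{distinct} activation regions and are therefore governed by different linear models. First I would fix the value function $v(S)$ to be $f$ evaluated on the masked input, matching the conventions of the statement so that $v(S \cup \{i\}) = f(x^S)$ and $v(S) = f(\bar{x}^S)$, where $x^S$ leaves $\overline{S\cup\{i\}}$ at the baseline and $\bar{x}^S$ leaves $\overline{S}$ at the baseline. By Theorem~\ref{thm:llms}, each such evaluation equals an affine form in the full input using the weights and bias of its \emph{own} region, giving for the $j$-th output $f(x^S)_j = \sum_{k=1}^n w_{k,j}^{\mathbf{P}(x^S)}(x^S)_k + b_j^{\mathbf{P}(x^S)}$ and analogously for $\bar{x}^S$ with $\mathbf{P}(\bar{x}^S)$.

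Next I would form the marginal contribution $\Delta(i,S)_j = f(x^S)_j - f(\bar{x}^S)_j$ and expand it coordinatewise. The crucial bookkeeping step is to classify each coordinate $k$ by the masking pattern: coordinates in $S$ are unmasked (hence equal) in both $x^S$ and $\bar{x}^S$, coordinates in $\overline{S}\setminus\{i\}$ are masked to the baseline (hence equal) in both, while coordinate $i$ is unmasked in $x^S$ but masked in $\bar{x}^S$. For the two classes where the input values coincide, the two linear models still differ, so each such coordinate contributes a pure weight-difference term $(w_{k,j}^{\mathbf{P}(x^S)} - w_{k,j}^{\mathbf{P}(\bar{x}^S)})$ times the shared value; for $k=i$ the inputs themselves differ, yielding $w_{i,j}^{\mathbf{P}(x^S)}x^S_i - w_{i,j}^{\mathbf{P}(\bar{x}^S)}\bar{x}^S_i$. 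Collecting these with the bias difference $b_j^{\mathbf{P}(x^S)} - b_j^{\mathbf{P}(\bar{x}^S)}$ reproduces exactly the bracketed expression in the statement.

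Finally I would substitute this expression for $\Delta(i,S)_j$ into the Shapley summation $\sum_{S \subseteq [n]\setminus\{i\}} \frac{(n-|S|-1)!|S|!}{N!}\,\Delta(i,S)_j$, which gives the claimed formula verbatim. The main obstacle is not any deep argument but the careful index management in the coordinate split—in particular, resisting the temptation to cancel the weight-difference terms, which vanish only if the two activation regions coincide, precisely the assumption the global version drops relative to the local Lemma. I would therefore stress at the outset that the entire content of the theorem lies in retaining these cross-region differences, and that Lemma~\ref{thm:shap_local} is recovered as the special case $\mathbf{P}(x^S) = \mathbf{P}(\bar{x}^S)$ for every coalition, under which all weight-difference sums collapse to zero.
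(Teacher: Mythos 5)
Your proposal is correct and follows essentially the same route as the paper's own proof: substitute the region-specific local linear models $\eta^{\mathbf{P}(x^S)}$ and $\eta^{\mathbf{P}(\bar{x}^S)}$ into each marginal contribution, split the coordinate sum over $S$, $\overline{S}\setminus\{i\}$, and $\{i\}$ using the fact that the two masked vectors agree except at coordinate $i$, and then sum with the Shapley weights. Your closing remark—that Lemma~\ref{thm:shap_local} is recovered exactly when $\mathbf{P}(x^S) = \mathbf{P}(\bar{x}^S)$ for every coalition, collapsing the weight-difference terms—is a nice addition the paper leaves implicit, but the argument itself is the same.
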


\begin{proof}
The theorem follows form substituting the respective linear models for each marginal contribution of a Shapley value. This entails that the marginal contribution can be written as

\begin{align*}
    \Delta_f(i,S)_j & = f(x_{S\cup \{i\}}, x_{\overline{S\cup \{i\}}} )_j - f(x_{S}, x_{\overline{S}})_j\\
    & = \eta^{\mathbf{P}(x^S)}(x^S)_j - \eta^{\mathbf{P}(\bar{x}^S)}(\bar{x}^S)_j\\
    & = \sum_{k \in [n]} \left( w_{k,j}^{\mathbf{P}(x^S)} x^S_k \right) + b_j^{\mathbf{P}({x}^S)} - \sum_{k \in [n]} \left( w_{k,j}^{\mathbf{P}(\bar{x}^S)} \bar{x}^S_k \right) - b_j^{\mathbf{P}(\bar{x}^S)} \\
    & = b_j^{\mathbf{P}({x}^S)}- b_j^{\mathbf{P}(\bar{x}^S)} + \sum_{k \in S} \left( w_{k,j}^{\mathbf{P}(x^S)} x^S_k \right) +  \sum_{k \in \overline{S}/{i}} \left( w_{k,j}^{\mathbf{P}(x^S)} x^S_k \right) + w_{i,j}^{\mathbf{P}(x^S)} x^S_i \\ & - \sum_{k \in S} \left( w_{k,j}^{\mathbf{P}(\bar{x}^S)} \bar{x}^S_k \right)- \sum_{k \in \overline{S}/\{i\}} \left( w_{k,j}^{\mathbf{P}(\bar{x}^S)} \bar{x}^S_k \right) -w_{i,j}^{\mathbf{P}(\bar{x}^S)} x^S_i.
\end{align*}
Since $x_k = \bar{x}_k, \forall k \neq i$, we collect the terms to get 

\begin{align*}
    \Delta_f(i,S)_j
    & = b_j^{\mathbf{P}(x^S)}- b_j^{\mathbf{P}(\bar{x}^S)} + w_{i,j}^{\mathbf{P}(x^S)}x^S_{i} - w_{i,j}^{\mathbf{P}(\bar{x}^S)}\bar{x}^S_{i}  \\ & + \sum_{k\in S}(w_{k,j}^{\mathbf{P}(x^S)} - w_{k,j}^{\mathbf{P}(\bar{x}^S)} )x^S_{k} + \sum_{k \in \overline{S}/\{i\}}(w_{k,j}^{\mathbf{P}(x^S)} - w_{k,j}^{\mathbf{P}(\bar{x}^S)} )\bar{x}^S_{k}.
\end{align*}

Averaging over $S \in \mathcal{P}([n])/\{i\}$ ends the proof.  
\end{proof}

\end{document}